\newcommand\defeq{:=}
\algnewcommand\algorithmicinput{\textbf{Input:}}
\algnewcommand\INPUT{\item[\algorithmicinput]}
\algnewcommand\algorithmicoutput{\textbf{Output:}}
\algnewcommand\OUTPUT{\item[\algorithmicoutput]}
\newtheorem{assumption}{Assumption}
\DeclareMathOperator*{\argmax}{argmax}
\DeclareMathOperator*{\argmin}{arg\;min}
\newcommand\vect[1]{\mathbf #1}
\newcommand{\vx}{\vect{x}}  
\newcommand{\vy}{\vect{y}}  
\newcommand{\vz}{\vect{z}}
\newcommand{\mB}{\mathbf{B}}
\newcommand{\signalsize}{n}
\newcommand{\nDualLassoObj}{\mathcal{D}}
\newcommand{\graphsigs}{\mathbb{R}^{|\nodes|}} 
\newcommand{\edgesigs}{\mathbb{R}^{|\edges|}}
\newcommand{\edges}{\mathcal{E}}
\newcommand{\nLassoObj}{\mathcal{L}}
\newcommand{\cluster}{\mathcal{C}}
\newcommand{\nodes}{\mathcal{V}}
\newcommand{\graph}{\mathcal{G}}
\newcommand{\extgraph}{\widetilde{\mathcal{G}}}
\newcommand{\trainingset}{\mathcal{M}}
\newcommand{\samplingset}{\mathcal{M}}
\newcommand{\partition}{\mathcal{F}}
\newcommand{\numnodes}{n}
\newcommand{\incidencemtx}{B}
\newcommand{\primslp}{\widehat{\vx}}
\newcommand{\dualslp}{\widehat{\vy}}
\newcommand{\flowvec}{\widetilde{\mathbf{y}}}
\newcommand{\flow}{\tilde{y}}
\newtheorem{theorem}{Theorem}
\newtheorem{definition}[theorem]{Definition}
\newtheorem{proposition}[theorem]{Proposition}
\newtheorem{corollary}[theorem]{Corollary}
\title{On the Duality between Network Flows and Network Lasso}
\name{Alexander Jung}
\address{Department of Computer Science, Aalto University, Finland; firstname.lastname(at)aalto.fi}
\begin{document}
	\maketitle
\begin{abstract}
Many applications generate data with an intrinsic network structure such as 
time series data, image data or social network data. The network Lasso (nLasso) 
has been proposed recently as a method for joint clustering and optimization 
of machine learning models for networked data. The nLasso extends the Lasso 
from sparse linear models to clustered graph signals. This paper explores the 
duality of nLasso and network flow optimization. We show that, in a very precise 
sense, nLasso is equivalent to a minimum-cost flow problem on the data network 
structure. Our main technical result is a concise characterization of nLasso solutions 
via existence of certain network flows. The main conceptual result is a useful link 
between nLasso methods and basic graph algorithms such as clustering or maximum flow. 
\end{abstract}

\section{Introduction}
\label{sec_intro}




The network Lasso (nLasso) has been proposed recently to fit localized models 
to networked data \cite{NetworkLasso}. Localized models allow to use different model 
parameters for different data nodes. However, the node-wise parameters are coupled 
by require them to have a small total variation (TV). 

Efficient methods to process networked data are offered by graph algorithms such 
as clustering or network flow optimization \cite{BertsekasNetworkOpt}. 
While these graph algorithm only use the network structure, the 
nLasso also takes additional information into account \cite{NetworkLasso}. 
We represent this additional information in the from of a graph signal which 
maps individual data points to a signal value (``label''). 

We explore the duality between nLasso and a minimum-cost flow problem. 
This is a special case of the duality between structured norm minimization 
and network flow problems studied in \cite{Mairal2010}. In contrast to \cite{Mairal2010}, 
we do not use this duality to apply network flow methods to solve nLasso but rather 
use the existence of certain network flows to characterize nLasso solutions. 

Our analysis relates the performance of nLasso methods for joint optimization 
and clustering to existence of network flows which serve as a proxy measure 
for the connectivity of clusters. This is somewhat similar to the concept of 
conductivities used for the design and analysis of clustering methods in \cite{Yin2017}. 
In contrast to \cite{Yin2017}, we study networked data point providing additional information 
in the form of a graph signal. 

The main contributions of this paper are as follows.  
\begin{itemize} 
\item We show that the convex dual of nLasso is equivalent to a particular minimum-cost flow problem. \\[-5mm]
\item We interpret a primal-dual method for nLasso as a distributed 
network flow optimization method.  \\[-5mm]
\item We characterize the solutions of nLasso via the existence of sufficient large network flows 
between cluster boundaries and sampled (labeled) nodes.  \\[-5mm]
\item We provide a novel interpretation of the nLasso parameter as a scaling of edge capacities in a flow network. 
\end{itemize}

{\bf Notation.} The sub-differential of a function $g(\vx)$ at $\vx_{0}\!\in\!\mathbb{R}^{n}$ is the set 
\begin{equation} 
\partial g(\vx_{0})\!\defeq\!\{ \vy\!\in\!\mathbb{R}^{n}\!:\!g(\vx)\!\geq\!g(\vx_{0})\!+\!\vy^{T}(\vx\!-\!\vx_{0}) \mbox{ for any } \vx \}. \nonumber 
\end{equation}  
The convex conjugate function of $g(\vx)$ is \cite{BoydConvexBook}
\begin{equation}
\label{equ_def_convex_conjugate}
g^{*}(\hat{\vy}) \defeq \sup_{\vy \in \mathbb{R}^{n}} \vy^{T}\hat{\vy}- g( \vy). 
\end{equation}

\vspace*{-3mm}
\section{Recovering Clustered Graph Signals}
\label{sec_setup}
We consider networked data whose network structure is encoded in an undirected empirical 
graph $\graph = (\nodes, \edges, \mathbf{W})$. 
The nodes $i \in \nodes=\{1,\ldots,\numnodes\}$ of the empirical graph 
represented individual data points. Similar data points are connected by 
an edge $\{i,j\} \in \edges$ with some weight $W_{i,j}\!>\!0$ that quantifies 
the amount of similarity between $i,j \in \nodes$. We depict an example of an 
empirical graph $\graph$ in Fig.\ \ref{fig:duality}. 

\begin{figure}[htbp]
	\includegraphics[width=\columnwidth]{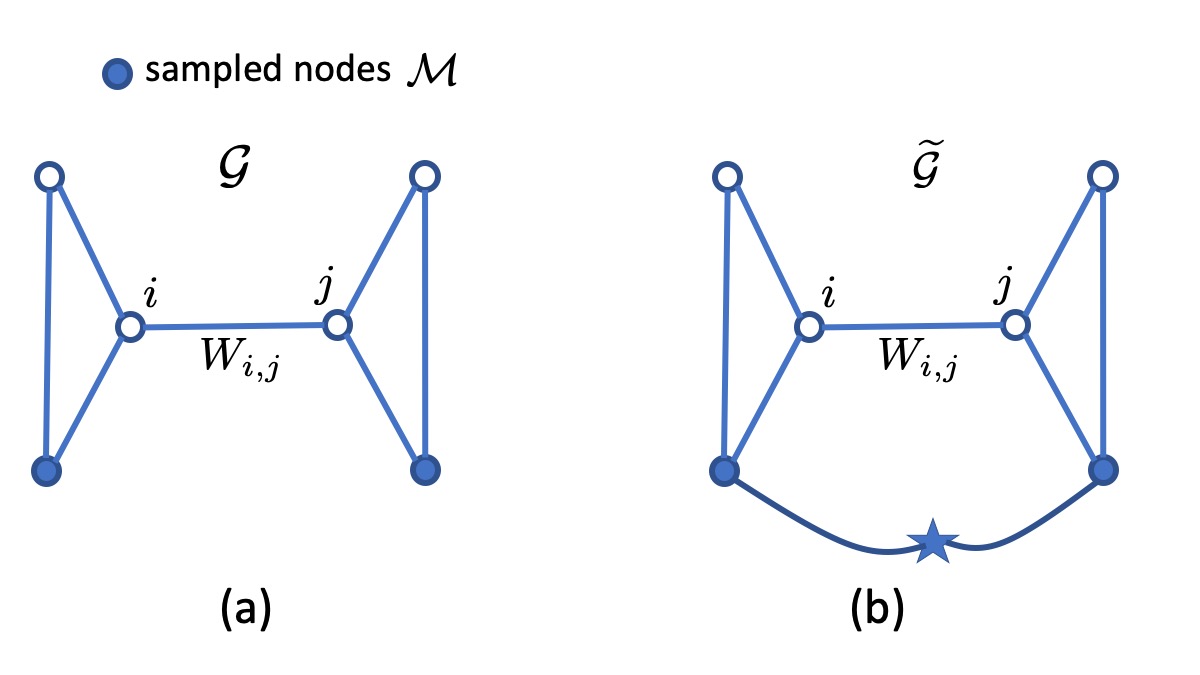}
	\vspace*{-8mm}
	\caption{(a) Empirical graph of networked data including a sampling set of nodes whose signal values are observed. 
		(b) Extended empirical graph obtained by adding the node $\star$ and edges between sampled nodes $i \in \samplingset$ and $\star$. } \label{fig:duality}
\end{figure}

The neighbourhood $\mathcal{N}(i)$ and degree $d_{i}$ 
of a node $i \in \nodes$ are defined, respectively, as 
\vspace*{-2mm}
\begin{equation} 
\label{equ_def_neighborhood}
\mathcal{N}(i) \defeq \{ j \in \nodes : \{i,j\} \!\in\!\edges \} \mbox{, } d_{i} \defeq \big|  \mathcal{N}(i)\big|. 
\vspace*{-2mm}
\end{equation} 

For a given undirected empirical graph $\graph=(\nodes,\edges,\mathbf{W})$, 
we orient the undirected edge $\{i,j\}$ by defining the head as $e^{+}\!=\!\min \{i,j\}$ 
and the tail as $e^{-}\!=\!\max \{i,j\}$. Each undirected edge $e\!=\!\{i,j\}$ is associated 
with the directed edge $(e^{+},e^{-})$. 

We need the directed neighbourhoods $\mathcal{N}^{+}(i)\!=\!\{ j\!\in\!\nodes: (i,j)\!\in\!\edges \}$, 
$\mathcal{N}^{-}(i)\!=\!\{ j\!\in\!\nodes: (j,i)\!\in\!\edges \}$ and the incidence matrix $\mB \!\in\! \mathbb{R}^{\edges \times \signalsize}$, 
\vspace*{-2mm}
\begin{equation}
\incidencemtx_{e,i}\!=\!1 \mbox{ for } i\!=\!e^{+}, \incidencemtx_{e,i}\!=\!-1 \mbox{ for }i\!=\!e^{-},\incidencemtx_{e,i}\!=\!0 \mbox{ else.}    \label{equ_def_incidence_mtx}
\end{equation} 

Beside the network structure, datasets carry additional information which we represent 
by a graph signal $\vx = (x_{1},\ldots,x_{\signalsize})^{T} \in \mathbb{R}^{\signalsize}$. 
The graph signal values $x_{i}$ might represent instantaneous 
amplitudes of an audio signal, the greyscale values of image pixels or the probabilities 
of social network members taking a particular action. 
We assume that signal values $x_{i}$ are known only at few nodes $i \in \nodes$ of a (small) sampling  
set $\trainingset \subseteq \nodes$. Our goal is to recover the unknown signal values $x_{i}$ for $i \in \nodes \setminus \trainingset$. 

To recover the signal values $x_{i}$, for $i\!\in\!\nodes$ based on knowing them 
only on a (small) training set, we exploit the tendency 
of natural graph signals to be clustered.
\vspace*{-1mm}
\begin{assumption}[informal]
\label{equ_asspt_cluster_informal}
Nodes $i,j\!\in\!\nodes$ within a well-connected subset (cluster) 
have similar signal values $x_{i}\!\approx\!x_{j}$. 
\vspace*{-3mm}
\end{assumption} 
This clustering assumption is used in image processing to model images that 
are composed of few components within which the pixel colours are approximately 
constant \cite{rudin1992nonlinear}. The clustering assumption is also used in social 
sciences where signal values represent certain features of individuals that are similar 
within well-connected groups (clusters) \cite{NewmannBook}. More broadly, (variants) of 
the clustering assumption motivate semi-supervising learning methods \cite{SemiSupervisedBook}. 

To make the informal Asspt.\ \ref{equ_asspt_cluster_informal} precise we need a measure for 
how well a graph signal conforms to the cluster structure of the empirical graph $\graph$. 
We measure this ``clusteredness'' of a graph signal $\vx$ using the weighted TV \cite{rudin1992nonlinear,Wang2016}
\begin{equation} 
\label{equ_def_TV}
\| \vx \|_{\rm TV} \defeq \sum_{\{i,j\} \in \edges} W_{i,j}  | x_{j}\!-\!x_{i}|.
\end{equation} 

Signal recovery methods based on TV minimization \eqref{equ_def_TV} turn out 
to be attractive statistically and computationally. These methods allows to recover 
clustered graph signals from very few signal samples \cite{AvilesRivero2019,elalaoui16,NSZ09}. 
This property is appealing for applications where the acquisition of signal values 
(labels) is costly. Computationally, TV minimization can be implemented as highly 
scalable message passing protocols (see Section \ref{sec_lNLasso_pd}). 

As shown in Sec.\ \ref{sec_nlasso_and_dual}, TV minimization  
is, in a very precise sense, equivalent to optimizing network flows. 
The concept of network flows is somewhat dual to the concept of graph signals. 
While the domain of graph signals is the node set $\nodes$ of a graph, network 
flows are defined on the edges $\edges$ of a graph. 
\begin{definition}
	A network flow $\flow:\!\edges\!\rightarrow\!\mathbb{R}$ with supplies $v_{i}$,
	assigns each directed edge $e=(i,j)\!\in\!\edges$ the value $\flow_{e}$ with
	\begin{itemize} 
		\item the capacity constraints: 
		\begin{equation}
		\label{equ_def_cap_constraints}
		|\flow_{e}| \leq \lambda W_{e} \mbox{ for each edge } e \in \edges,
		\end{equation} 
		\item and the conservation law: 
		\begin{equation} 
		\label{equ_conservation_law}
		\hspace*{-3mm}\sum_{j \in \mathcal{N}^{+}(i)} \hspace*{-2mm}\flow_{(i,j)}\!-\!\sum_{j \in \mathcal{N}^{-}(i)} \hspace*{-2mm} \flow_{(j,i)} = v_{i} \mbox{ for each } i\!\in\!\nodes.
		\end{equation} 
	\end{itemize} 
\end{definition}


\vspace*{-2mm}
\section{Network Lasso and its Dual}
\label{sec_nlasso_and_dual}
\vspace*{-2mm}

The cluster assumption suggests to learn graph signals by balancing 
the empirical error with the TV $\| \tilde{\vx} \|_{\rm TV}$, 
\begin{align} 
\primslp &\!\in\!\argmin_{\tilde{\vx} \in \graphsigs} (1/2) \sum_{i \in \samplingset} (x_{i}\!-\!\tilde{x}_{i})^2\!+\!\lambda  \| \tilde{\vx} \|_{\rm TV}.  \label{equ_nLasso}
\end{align}
The optimization problem \eqref{equ_nLasso} is a special case of 
the original (generic) nLasso formulation \cite{NetworkLasso}. 
Since the objective function and the constraints in \eqref{equ_nLasso} are 
convex, the optimization problem \eqref{equ_nLasso} is a convex 
optimization problem \cite{BoydConvexBook}. 

The nLasso \eqref{equ_nLasso} implements regularized risk 
minimization using TV \eqref{equ_def_TV} as regularization term \cite{VapnikBook}. 
The solutions $\hat{\vx}$ of \eqref{equ_nLasso} make an optimal compromise between 
consistency with observed signal samples $x_{i}$, for $i \in \samplingset$, and small TV $\| \tilde{\vx} \|_{\rm TV}$. 
The tuning parameter $\lambda\!>\!0$ in \eqref{equ_nLasso} allows to trade a small mean squared error (MSE) 
$(1/2) \sum_{i \in \trainingset}  (\hat{x}_{i}\!-\!x_{i})^2$ against a small TV $\|\hat{\vx}\|_{\rm TV}$ 
of the recovered graph signal $\hat{\vx}$. A large $\lambda$ enforces 
small TV, while a small $\lambda$ favours low MSE. 

The non-smooth objective function in \eqref{equ_nLasso} rules out 
gradient (descent) methods.  However, the objective function is the 
sum of two function that can be efficiently minimized individually. 
This compositional structure of \eqref{equ_nLasso} can be exploited 
by defining a dual problem. 

It turns out that this dual problem has an interpretation as network (flow) 
optimization \cite{BertsekasNetworkOpt}. Moreover, by jointly considering 
\eqref{equ_nLasso} and its dual, we obtain an efficient method for 
simultaneously solving both problems (see Section \ref{sec_lNLasso_pd}). 

To define the dual problem we first rewrite nLasso \eqref{equ_nLasso} 
as 
\begin{align}
\label{equ_nLasso_unconstr}
\primslp & \!\in\! \argmin_{\tilde{\vx} \in \mathbb{R}^{\signalsize}} \nLassoObj(\tilde{\vx}) \defeq g(\mB \tilde{\vx}) + h(\tilde{\vx}),
\end{align}
with the incidence matrix $\mB$ (see \eqref{equ_def_incidence_mtx}) and 
\begin{equation}
 g(\tilde{\vy})\!\defeq\!\sum_{e \in \edges} 
 \lambda W_{e}|y_{e}| \mbox{, and } h(\tilde{\vx})\!\defeq\!(1/2) \sum_{i \in \samplingset} (\tilde{x}_{i}\!-\!x_{i})^{2}. \label{equ_def_components_primal}
 \end{equation} 

We refer to \eqref{equ_nLasso_unconstr} as the primal problem (or formulation) of nLasso \eqref{equ_nLasso}. 
The dual problem is 
\begin{align} 
\label{equ_dual_SLP}
\dualslp & \!\in\! \argmax_{\vy \in \mathbb{R}^{|\edges|}} \nDualLassoObj(\vy) \defeq -h^{*}(-\mB^{T} \vy) - g^{*}(\vy).
\end{align} 
The objective function $\nDualLassoObj(\vy)$ of the dual problem \eqref{equ_dual_SLP} is composed of the convex conjugates 
(see \eqref{equ_def_convex_conjugate}) of the components $h(\vx)$ and $g(\vy)$ of the primal problem \eqref{equ_nLasso_unconstr}. 
These convex conjugates are given explicitly by 
\begin{align} 
\label{equ_conv_conj_h}
h^{*} (\tilde{\vx}) & = \sup_{\vz \in \mathbb{R}^{\signalsize}}  \vz^{T} \tilde{\vx}- h(\vz)  \\
& \hspace*{-12mm} \stackrel{\eqref{equ_def_components_primal}}{=} \begin{cases} \infty &\mbox{ if } \tilde{x}_{i}\!\neq\!0 \mbox{ for some } i\!\in\!\nodes\!\setminus\!\trainingset \\ (1/2) \!\sum\limits_{i \in \trainingset}\!  \tilde{x}_{i} x_{i}\!+\!\tilde{x}_{i}^{2}/2& \mbox{ otherwise,} \end{cases}. \nonumber 
\end{align} 
and 
\begin{align} 
\label{equ_conv_conj_g}
g^{*}(\vy)& = \sup_{\vz \in \mathbb{R}^{\edges}}  \vz^{T} \vy - g(\vz)  \stackrel{\eqref{equ_def_components_primal}}{=} \sup_{\vz \in \mathbb{R}^{\edges}}  \vz^{T} \vy - \lambda\sum_{e \in \edges} W_{e} |z_{e}| \nonumber  \\
& \stackrel{}{=} \begin{cases}  \infty & \mbox{ if } |y_e|> \lambda W_{e} \mbox{ for some } e \in \edges \\ 0 & \mbox{ otherwise.} \end{cases}
\end{align} 

The relation between the primal problem \eqref{equ_nLasso_unconstr} and the dual 
problem \eqref{equ_dual_SLP} is made precise by \cite[Thm.\ 31.3]{RockafellarBook}. 
In particular, the optimal values of \eqref{equ_nLasso_unconstr} and \eqref{equ_dual_SLP} coincide: 
\begin{equation} 
\label{equ_zero_duality_gap}
\min_{\tilde{\vx} \in \mathbb{R}^{\signalsize}}  \underbrace{g(\mB \tilde{\vx}) + h(\tilde{\vx})}_{\nLassoObj(\tilde{\vx})} = \max_{\vy \in \mathbb{R}^{\edges}} \underbrace{-h^{*}(-\mB^{T}\vy) - g^{*}(\vy)}_{\nDualLassoObj(\vy)}. 
\end{equation}
According to \cite[Thm.\ 31.3]{RockafellarBook}, a pair of vectors $\widehat{\vx} \in \graphsigs, \dualslp \in \mathbb{R}^{\edges}$ 
are solutions to the primal \eqref{equ_nLasso_unconstr} and dual problem \eqref{equ_dual_SLP}, respectively, if and only if 
\begin{equation}
\label{equ_two_coupled_conditions}
-(\mB^{T} \dualslp) \in \partial h(\primslp) \mbox{, } \mB \primslp \in \partial g^{*}(\dualslp) . 
\end{equation} 
Given any dual solution $\dualslp \in \mathbb{R}^{\edges}$ to \eqref{equ_dual_SLP}, 
every nLasso solution $\hat{\vx}$ must satisfy \eqref{equ_two_coupled_conditions}. 
The condition \eqref{equ_two_coupled_conditions} also motivates a primal-dual 
method to solve \eqref{equ_nLasso} (see Section \ref{sec_lNLasso_pd}).

Our main result is the equivalence of the nLasso dual \eqref{equ_dual_SLP} to a 
minimum-cost flow problem for an extended empirical graph $\extgraph$ (see Fig.\ \ref{fig:duality}-(b)). 
The graph $\extgraph$ is obtained from the empirical graph $\graph$ by adding  
the node $\star$ and the edges $ (i,\star)$ for each sampled node $i\!\in\!\samplingset$. 


\begin{proposition}
\label{prop_dual_TV_min_flow}
The dual problem \eqref{equ_dual_SLP} of nLasso \eqref{equ_nLasso} 
is equivalent to the minimum-cost flow problem  
\begin{align}
\label{equ_network_flow}
\min_{\flowvec \in \mathbb{R}^{\widetilde{\edges}}} & \sum_{i \in \trainingset}  \flow_{(i,\star)} \big( (1/2) \flow_{(i,\star)}\!-\!x_{i}  \big),   \\ 
\mbox{s.t.} & \sum_{j \in \mathcal{N}^{+}(i)} \hspace*{-2mm}\flow_{(i,j)}\!-\!\sum_{j \in \mathcal{N}^{-}(i)} \hspace*{-2mm} \flow_{(j,i)}  = 0 \mbox{ for all } i \in \nodes \cup \{\star\} \label{equ_constrains_extended_graph} \\ 
&  |\flow_{e} | \leq \lambda W_{e} \mbox{ for all } e \in \edges. \nonumber
\end{align}
\end{proposition}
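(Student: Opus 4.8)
\emph{Proof strategy.} The statement is a purely algebraic reformulation of the dual \eqref{equ_dual_SLP} — no appeal to the zero-duality-gap identity \eqref{equ_zero_duality_gap} is needed — so the plan is to substitute the explicit conjugates \eqref{equ_conv_conj_h}, \eqref{equ_conv_conj_g} into $\nDualLassoObj(\vy)$ and then read off a flow on the extended graph $\extgraph$. First I would use \eqref{equ_conv_conj_g}: the term $-g^{*}(\vy)$ equals $0$ on the box $\{\vy : |y_{e}| \le \lambda W_{e}\ \forall e \in \edges\}$ and $-\infty$ elsewhere, so it contributes precisely the capacity constraints. Then I would use \eqref{equ_conv_conj_h}: writing $u_{i} \defeq -[\mB^{T}\vy]_{i}$, the term $-h^{*}(-\mB^{T}\vy)$ is $-\infty$ unless $u_{i}=0$ for every $i \in \nodes \setminus \samplingset$, and otherwise equals $-\sum_{i \in \samplingset}\big(x_{i} u_{i} + u_{i}^{2}/2\big)$. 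Hence \eqref{equ_dual_SLP} is exactly the problem of maximizing $-\sum_{i \in \samplingset}\big(x_{i} u_{i} + u_{i}^{2}/2\big)$ over all $\vy$ with $|y_{e}| \le \lambda W_{e}$ for $e \in \edges$ and $[\mB^{T}\vy]_{i}=0$ for $i \in \nodes \setminus \samplingset$.

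Next I would interpret $[\mB^{T}\vy]_{i} = \sum_{j \in \mathcal{N}^{+}(i)} y_{(i,j)} - \sum_{j \in \mathcal{N}^{-}(i)} y_{(j,i)}$ as the net outflow (divergence) at node $i$ when $\vy$ is viewed as a flow on $\edges$, and lift the problem to $\extgraph$. The map I have in mind sends $\vy$ to the flow $\flowvec$ on $\widetilde{\edges}$ with $\flow_{e} \defeq -y_{e}$ for $e \in \edges$ (the sign being forced by the incidence-matrix orientation convention — pinning it down is part of the bookkeeping) and $\flow_{(i,\star)} \defeq [\mB^{T}\vy]_{i}$ for $i \in \samplingset$. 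I would then check this is a bijection between the feasible set found above and the feasible set of \eqref{equ_network_flow}: the capacity constraints transfer verbatim, and the $\star$-edges stay uncapacitated in accordance with there being no bound on $u_{i}$; the conservation law \eqref{equ_constrains_extended_graph} at an unsampled node $i$ is just $[\mB^{T}\vy]_{i}=0$; at a sampled node $i$ it is the defining equation for $\flow_{(i,\star)}$; and at $\star$ it reads $\sum_{i \in \samplingset}\flow_{(i,\star)} = 0$, which is automatic since each row of $\mB$ sums to zero, so $\sum_{i \in \nodes}[\mB^{T}\vy]_{i} = \mathbf{1}^{T}\mB^{T}\vy = 0$ while the unsampled terms vanish.

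Finally, under this correspondence $u_{i}$ and $\flow_{(i,\star)}$ agree up to sign, so $\sum_{i \in \samplingset}\big(x_{i} u_{i} + u_{i}^{2}/2\big) = \sum_{i \in \samplingset}\flow_{(i,\star)}\big((1/2)\flow_{(i,\star)} - x_{i}\big)$ after absorbing that sign — that is, the dual objective is the negative of the flow cost in \eqref{equ_network_flow}. Since $\max(-f) = -\min f$ over the bijectively identified feasible sets, the two problems have optimal values that are negatives of each other and optimizers in bijection, which is the claimed equivalence. The conceptual content is short; I expect the only real difficulty to be the sign/orientation bookkeeping in the middle step — choosing how to orient the edges $\{i,\star\}$, fixing the sign of $\flow_{e}$ relative to $y_{e}$, and identifying $[\mB^{T}\vy]_{i}$ with the divergence — so that the conservation law and the quadratic per-edge cost (in particular the $-x_{i}$) line up simultaneously.
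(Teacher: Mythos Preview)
Your proposal is correct and follows essentially the same route as the paper's proof: substitute the explicit conjugates \eqref{equ_conv_conj_h}, \eqref{equ_conv_conj_g} into the dual to obtain a capacitated flow problem on $\graph$ with nonzero divergence only at sampled nodes, then absorb those divergences via the uncapacitated edges $(i,\star)$ in $\extgraph$ and check conservation at $\star$ using $\mathbf{1}^{T}\mB^{T}\vy=0$. The paper presents this via an explicit intermediate problem on $\graph$ before lifting to $\extgraph$, whereas you lift directly and are more careful about the sign bookkeeping between $y_{e}$, $\flow_{e}$, and $\flow_{(i,\star)}$, but the argument is the same.
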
 
The capacity constraints in \eqref{equ_constrains_extended_graph} do not 
include the augmented edges $(i,\star)$ for $i\!\in\!\samplingset$. The role 
of the nLasso parameter $\lambda$ in \eqref{equ_constrains_extended_graph} 
is a scaling of the edge capacities $W_{i,j}$. 

The problem \eqref{equ_network_flow} is an instance of a minimum-cost flow problem with 
convex separable cost functions (see \cite[Ch.\ 8]{BertsekasNetworkOpt}). Efficient methods 
for such flow problems are presented in \cite{BertsekasNetworkOpt}. The special case of a 
minimum-cost flow problem with convex quadratic functions, such as in \eqref{equ_network_flow}, 
is studied in \cite{Minoux1984,Vegh2016}. Instead of applying network flow methods, we will directly 
solve nLasso using a primal-dual method (see Section \ref{sec_lNLasso_pd}). 

\vspace*{-2mm}
\section{Statistical Aspects} 
\vspace*{-2mm}
\label{sec_stat_aspects}
To study nLasso solutions \eqref{equ_nLasso}, we will use a 
simple but useful model which implements the cluster assumption Asspt.\ \ref{equ_asspt_cluster_informal}, 
\vspace*{-2mm}
\begin{equation} 
\label{equ_piecewise_constant}
x_{i} = c_{k} \mbox{ for all nodes } i \in \cluster_{k} \mbox{ with coefficients } c_{k} \in \mathbb{R}. 
\vspace*{-1mm}
\end{equation} 
The signal model \eqref{equ_piecewise_constant} involves an arbitrary 
but fixed partition $\partition = \big\{ \cluster_{1},\ldots,\cluster_{F} \big\}$ 
of the nodes $\nodes$ into disjoint clusters. Piece-wise constant signals 
are a special case of the large class of piece-wise polynomial graph signals \cite{ChenClustered2016}. 

Combining Proposition \ref{prop_dual_TV_min_flow} with the optimality 
condition (see \cite[Prop.\ 8.2.]{BertsekasNetworkOpt} offers a 
concise characterization of nLasso \eqref{equ_nLasso} solutions via 
existence of certain network flows. 
\begin{corollary}
\label{cor_flow_satur_constant}
Consider a graph signal \eqref{equ_piecewise_constant} 
and a flow $\flow_{e}$ on $\extgraph$ satisfying \eqref{equ_constrains_extended_graph} and 
\vspace*{-1mm}
\begin{align} 
| \flow_{e} | \begin{cases} &= \lambda W_{e} \mbox{ for } e \in \partial \partition,\mbox{ and }\\[-1mm]
           &< \lambda W_{e} \mbox{ otherwise. } \end{cases}  \label{equ_sat_non_sat}
\end{align}
The flow $\flow_{e}$ solves \eqref{equ_network_flow} if and only if, for each cluster $\cluster_{k}$, 
\vspace*{-2mm}
\begin{equation}
\label{equ_balanced_supply}
x_{i}  - \flow_{(i,\star)} = x_{j}  - \flow_{(j,\star)} \mbox{ for any } i,j \in \cluster_{k} \cap \samplingset. 
\vspace*{-1mm}
\end{equation} 
Given a flow $\flow_e$ satisfying \eqref{equ_sat_non_sat}, \eqref{equ_balanced_supply}, 
any nLasso \eqref{equ_nLasso} solution $\hat{\vx}$ is constant over non-saturated edges,
\begin{equation} 
\label{equ_opt_condition_constant}
\hat{x}_{i} = \hat{x}_{j} \mbox{ for } (i,j) \in \edges \mbox{ with } |\flow_{(i,j)}| < \lambda W_{i,j}. 
\end{equation}
Moreover, for each $j\in \cluster_{k}$, the nLasso value is
\begin{equation}
\hat{x}_{j}\!=\!x_{i}\!-\!\big[  \hspace*{-3mm}\sum_{i' \in \mathcal{N}^{+}(i)} \hspace*{-3mm} \flow_{(i,i')}- \hspace*{-3mm} \sum_{i' \in \mathcal{N}^{-}(i)} 
\hspace*{-4mm} \flow_{(i',i)} \big]  \mbox{ for some } i\!\in\!\samplingset \cap \cluster_{k}.  \label{equ_nLasso_value_solution}
\end{equation} 
\end{corollary}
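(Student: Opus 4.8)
The plan is to read off \eqref{equ_balanced_supply}--\eqref{equ_nLasso_value_solution} from the primal--dual optimality conditions, using Proposition~\ref{prop_dual_TV_min_flow} to move back and forth between the flow problem \eqref{equ_network_flow} and the nLasso dual \eqref{equ_dual_SLP}. Throughout, for a flow $\flowvec$ feasible for \eqref{equ_network_flow} I write $\diver_{i}(\flowvec)\defeq\sum_{j\in\mathcal{N}^{+}(i)}\flow_{(i,j)}-\sum_{j\in\mathcal{N}^{-}(i)}\flow_{(j,i)}$ for its net outflow at $i\in\nodes$ taken over the edges of the \emph{original} graph $\graph$; the conservation law \eqref{equ_constrains_extended_graph} then reads $\diver_{i}(\flowvec)=\flow_{(i,\star)}$ for $i\in\samplingset$ and $\diver_{i}(\flowvec)=0$ for $i\in\nodes\setminus\samplingset$. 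By Proposition~\ref{prop_dual_TV_min_flow} the restriction $\flowvec|_{\edges}$ is feasible for \eqref{equ_dual_SLP}, and $\flowvec$ solves \eqref{equ_network_flow} iff $\flowvec|_{\edges}$ solves \eqref{equ_dual_SLP}.

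For the characterization via \eqref{equ_balanced_supply} I would invoke the optimality criterion for a minimum-cost flow with convex separable costs \cite[Prop.~8.2]{BertsekasNetworkOpt}: a feasible $\flowvec$ is optimal for \eqref{equ_network_flow} iff there is a price vector $(p_{i})_{i\in\nodes\cup\{\star\}}$ such that on each cost-free, capacitated original edge $e\in\edges$ we have $p_{e^{+}}=p_{e^{-}}$ whenever $|\flow_{e}|<\lambda W_{e}$, and $p_{e^{+}}\geq p_{e^{-}}$ (resp.\ $p_{e^{+}}\leq p_{e^{-}}$) when $\flow_{e}=\lambda W_{e}$ (resp.\ $\flow_{e}=-\lambda W_{e}$), while on each uncapacitated, quadratic-cost augmented edge the derivative condition reads $x_{i}-\flow_{(i,\star)}=p_{i}-p_{\star}$. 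Under the saturation pattern \eqref{equ_sat_non_sat} the non-saturated original edges are exactly the intra-cluster edges, so the first family of conditions forces $p$ to be constant, say $p^{(k)}$, along each connected cluster $\cluster_{k}$; feeding this into the augmented-edge equalities gives $x_{i}-\flow_{(i,\star)}=p^{(k)}-p_{\star}$ for every $i\in\samplingset\cap\cluster_{k}$, so the right-hand side cannot depend on the chosen sampled node -- which is precisely \eqref{equ_balanced_supply}. Conversely, \eqref{equ_balanced_supply} lets us \emph{define} a consistent value $p^{(k)}$ on every cluster meeting $\samplingset$.

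Granting that $\flowvec|_{\edges}$ is thereby a solution $\dualslp$ of \eqref{equ_dual_SLP}, the remaining two displays follow from the coupled optimality conditions \eqref{equ_two_coupled_conditions} applied to an arbitrary nLasso solution $\primslp$. Since $g^{*}$ is the indicator of the box $\{\vy:|y_{e}|\leq\lambda W_{e}\}$, the inclusion $\mB\primslp\in\partial g^{*}(\dualslp)$ says that each component $(\mB\primslp)_{e}=\hat{x}_{e^{+}}-\hat{x}_{e^{-}}$ lies in the normal cone of this box at $\dualslp$, which is $\{0\}$ at every non-saturated coordinate; this is \eqref{equ_opt_condition_constant}. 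Because $h$ is differentiable, $-\mB^{T}\dualslp\in\partial h(\primslp)$ forces $\hat{x}_{i}-x_{i}=-\diver_{i}(\flowvec)$ for $i\in\samplingset$, i.e.\ $\hat{x}_{i}=x_{i}-\flow_{(i,\star)}$; propagating this common value along the intra-cluster non-saturated edges through \eqref{equ_opt_condition_constant} and connectedness of $\cluster_{k}$ yields \eqref{equ_nLasso_value_solution}.

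The step I expect to be the real obstacle is the \emph{converse} half of the characterization: showing that feasibility together with \eqref{equ_sat_non_sat} and \eqref{equ_balanced_supply} is \emph{sufficient} for optimality of $\flowvec$. This requires extending the cluster prices $p^{(k)}$ to the clusters disjoint from $\samplingset$ \emph{and} meeting the one-sided price inequalities on the saturated inter-cluster edges $\boundary$ -- equivalently, checking that the sign of each saturated boundary flow $\flow_{e}$ is compatible with the jump $(\mB\primslp)_{e}$ of the induced clustered signal. (This is exactly the residual term one is left with if one instead closes the argument through the zero duality gap \eqref{equ_zero_duality_gap}, since $\nLassoObj(\primslp)-\nDualLassoObj(\dualslp)=\sum_{e}\big(\lambda W_{e}\,|(\mB\primslp)_{e}|-(\mB\primslp)_{e}\,\flow_{e}\big)$ once the $h$-part of the Fenchel--Young inequality is tight.) I would discharge it by passing to the residual graph of $\flowvec$ and using that a feasible flow is optimal iff its residual graph carries no cost-improving cycle: a cycle confined to one cluster has zero cost rate, whereas a cycle crossing $\boundary$ must traverse two augmented arcs at $\star$ and hence picks up a net rate $(\flow_{(i,\star)}-x_{i})-(\flow_{(j,\star)}-x_{j})$, which the one-sided residual capacities of the saturated edges along its $\graph$-portion together with \eqref{equ_balanced_supply} prevent from being negative. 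Making this cycle bookkeeping precise -- in particular, pinning down which orientation of a saturated boundary edge survives in the residual graph -- is the technical heart of the proof.
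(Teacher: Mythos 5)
Your proof follows essentially the same route as the paper's: the paper likewise invokes \cite[Prop.~8.2]{BertsekasNetworkOpt} for the equivalence of \eqref{equ_balanced_supply} with optimality of the flow under the saturation pattern \eqref{equ_sat_non_sat}, transfers the optimal flow to a dual solution of \eqref{equ_dual_SLP} via Proposition~\ref{prop_dual_TV_min_flow}, and reads \eqref{equ_opt_condition_constant} off the normal-cone part of \eqref{equ_two_coupled_conditions}; your derivation of \eqref{equ_nLasso_value_solution} from the $\partial h$ inclusion plus propagation along non-saturated edges is the intended (though unwritten) argument. The one point worth adding is that the ``obstacle'' you flag is genuine and is not discharged in the paper either: the price-vector criterion makes \eqref{equ_balanced_supply} clearly \emph{necessary}, but \emph{sufficiency} additionally requires that the sign of each saturated flow on $\partial \partition$ be compatible with the induced price differences between clusters (and that prices extend consistently to clusters disjoint from $\samplingset$), a condition imposed neither by the corollary's hypotheses nor verified in the paper's proof --- so your residual-cycle bookkeeping is the missing content of the argument rather than an optional refinement of it.
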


\begin{proof}
The optimality condition \cite[Prop.\ 8.2.]{BertsekasNetworkOpt} reveals that 
conditions \eqref{equ_sat_non_sat} and \eqref{equ_balanced_supply} are necessary and sufficient 
for the flow $\flow$ to be a solution to the minimum-cost flow problem \eqref{equ_network_flow}. 

Consider a flow $\flow$ that satisfies \eqref{equ_sat_non_sat} and \eqref{equ_balanced_supply} 
and therefore solves \eqref{equ_network_flow}. By Theorem \ref{prop_dual_TV_min_flow}, we can 
obtain a solution $\hat{y}$ to the nLasso dual problem \eqref{equ_dual_SLP} by $\hat{y}_{e} \defeq \flow_{e}$ for each 
edge $e \in \edges$. 

Given this particular (optimal) dual solution $\widehat{\vy}$, any solution $\hat{\vx}$ to TV minimization 
has to satisfy \eqref{equ_two_coupled_conditions}. Combining \eqref{equ_two_coupled_conditions} 
with properties of the sub-differential $\partial g^{*}(\vy)$ (see \eqref{equ_conv_conj_g} and \cite[Sec.\ 32]{RockafellarBook}) 
yields \eqref{equ_opt_condition_constant}. 
\vspace*{-3mm}
\end{proof}
The usefulness of Prop.\ \ref{cor_flow_satur_constant} depends on the ability to 
construct flows on $\extgraph$ satisfying \eqref{equ_sat_non_sat} and \eqref{equ_balanced_supply}. 
This might be easy for simple graph structures such as chains (see Sec.\ \ref{sec_num_experiment}). 
Another option is to use tractable probabilistic models, such as stochastic block models, 
for the empirical graph \cite{Mossel2012}. A large deviation analysis allows then to obtain 
characterization of network flows that hold with high probability \cite{Karger1999}. 

The condition \eqref{equ_sat_non_sat} can be used to guide the choice of the 
nLasso parameter $\lambda$ (see \eqref{equ_nLasso}). Using a larger value $\lambda$ will typically 
make condition \eqref{cor_flow_satur_constant} more likely to be satisfied, if the clusters 
$\cluster_{k}$ are sufficiently well connected. However, larger values of $\lambda$ will result 
in a bias of the nLasso estimate $\hat{x}_{i}$ due to \eqref{equ_nLasso_value_solution}. Thus, 
condition \eqref{cor_flow_satur_constant} and \eqref{equ_nLasso_value_solution} can help to 
avoid choosing $\lambda$ neither too small (which would make \eqref{cor_flow_satur_constant} 
unlikely to hold) nor too large (which would imply a large bias via \eqref{equ_nLasso_value_solution}). 


\vspace*{-4mm}
\section{Computational Aspects} 
\vspace*{-2mm}
\label{sec_lNLasso_pd}
The characterization \eqref{equ_two_coupled_conditions} of solutions to the 
nLasso \eqref{equ_nLasso} and its dual suggests to apply a convex 
primal-dual method \cite{pock_chambolle}. The implementation of this method 
follows similar lines as in \cite{LocalizedLinReg2019} and results in the iterations  
\begin{align} 
\tilde{x}_{i}  & \!\defeq\! 2 \hat{x}^{(k)}_{i} - \hat{x}^{(k\!-\!1)}_{i} \mbox{ for } i\!\in\!\nodes \label{equ_pd_first_update} \\[2mm]
\hat{y}^{(k\!+\!1)}_{e} &\!\defeq\!\hat{y}^{(k)}\!+\! (1/2)  (\tilde{x}_{i}\!-\!\tilde{x}_{j})\mbox{ for } e=(i,j)\!\in\!\edges \label{equ_pd_two}  \\[2mm]
\hat{y}^{(k\!+\!1)}_{e} &\!\defeq\! \hat{y}_{e}^{(k\!+\!1)} \max\{1, |\hat{y}_{e}^{(k\!+\!1)}|/(\lambda W_{e}) \}  \mbox{ for } (i,j)\!\in\!\edges   \label{equ_pd_three}  \\[2mm]
\hat{x}^{(k\!+\!1)}_{i}  & \!\defeq\! \hat{x}^{(k)}_{i}\!-\!\gamma_{i} \big[\hspace*{-1mm}\sum_{j\!\in\!\mathcal{N}^{+}(i)} \hspace*{-3mm}\hat{y}^{(k\!+\!1)}_{(i,j)}  \!-\!\hspace*{-1mm}\sum_{j\!\in\!\mathcal{N}^{-}(i)} \hspace*{-3mm}\hat{y}^{(k\!+\!1)}_{(j,i)} \big] \mbox{ for }  i\!\in\!\nodes \label{equ_pd_four} \\[2mm]
\hat{x}_{i}^{(k\!+\!1)} &\!\defeq\!  \big(\gamma_{i} x_{i}\!+\!\hat{x}^{(k\!+\!1)}_{i}\big)/(\gamma_{i}\!+\!1) \mbox{ for every } i\!\in\!\trainingset  \label{equ_pd_five} \\[2mm]
\bar{x}^{(k)}_{i} & \!\defeq\! (1\!-\!1/k) \bar{x}^{(k\!-\!1)}_{i}\!+\!(1/k) \hat{x}^{(k)}_{i} \mbox{ for } i\!\in\!\nodes. \label{equ_pd_last_update}
\end{align}
Here, $k=0,1,\ldots$ denotes the iteration counter and $\gamma_{i} \defeq 1/d_{i}$ 
is the inverse node degree \eqref{equ_def_neighborhood}. 

Standard results on convergence of primal-dual methods ensure that, irrespective 
of the initializations $\hat{x}^{(0)}$ and $\hat{y}^{(0)}$, the iterates $\bar{x}^{(k)}$ 
converge to a solution of nLasso \eqref{equ_nLasso} \cite{pock_chambolle}. Moreover, the 
rate at which the sub-optimality, in terms of objective value, decreases is $1/k$. 
This rate is essentially optimal, in a worst-case sense, for message passing methods \cite{ComplexitySLP2018}. 

We can interpret the updates \eqref{equ_pd_first_update}-\eqref{equ_pd_last_update} as a 
message passing rules for network-flow optimization \cite{}. In particular, 
the iterate  $\hat{y}^{(k)}$ is a flow which tends to a solution $\hat{y}$ of the 
nLasso dual problem \eqref{equ_dual_SLP}. 

The update \eqref{equ_pd_three} aims at enforcing the capacity constraints \eqref{equ_def_cap_constraints} 
for the flow iterate $\hat{y}^{(k)}$. The update \eqref{equ_pd_four} amounts to adjusting the current 
nLasso estimate $\hat{x}_{i}^{(k)}$, for each node $i \in\!\nodes$ by the 
demand induced by the current flow $\hat{y}^{(k)}$. Thus, \eqref{equ_pd_four} enforces 
the conservation law \eqref{equ_conservation_law} with demands $v_{i} = \hat{x}_{i}^{(k)}$. 

For each unobserved node $i\!\in\!\nodes \setminus \trainingset$, we can interpret the iterate $\hat{x}_{i}^{(k)}$ 
as the (scaled) cumulative demand induced by the flows $\hat{y}^{(k')}$ for $k'=1,\ldots,k$. 
The labeled nodes $i \in \trainingset$ have a constant supply $\hat{x}^{(k)}_{i} = x_{i}$ whose 
amount is the label $x_{i}$. The update \eqref{equ_pd_two} balances the discrepancies between 
the cumulated demands $\hat{x}^{(k)}_{i}$ by adapting the flow $\hat{y}^{(k)}_{(i,j)}$ through 
an edge $e=(i,j) \in \edges$ according to the difference $(\tilde{x}_{i} - \tilde{x}_{j})$. 


\vspace*{-4mm}
\section{Numerical Experiments} 
\vspace*{-3mm}
\label{sec_num_experiment}
We verify Prop.\ \ref{cor_flow_satur_constant} numerically using a chain-structured 
empirical graph $\graph$ which might represent time series data \cite{Brockwell91}. 
The chain structured empirical graph $\graph$ contains $\numnodes=10$ nodes which 
are partitioned into two clusters $\cluster_{1}\!=\!\{1,\ldots,5\}$ and $\cluster_{2}\!=\!\{6,\ldots,10\}$. 
Intra-cluster edges $e$ connecting nodes within the same cluster have unit weight $W_{e}\!=\!1$, 
while the boundary edge $e\!=\!\{5,6\}$ has weight $W_{e}\!=\!1/4$. 

We iterate the updates \eqref{equ_pd_first_update}-\eqref{equ_pd_last_update} for a fixed 
number of $K=1000$ iterations to recover a piece-wise constant graph signal \eqref{equ_piecewise_constant}, with 
$c_{1}\!=\!1$, $c_{2}\!=\!0$, from its values on the sampling set $\samplingset\!=\!\{2,7\}$. The nLasso 
parameter was set to $\lambda\!=\!1$ (see \eqref{equ_nLasso}). 
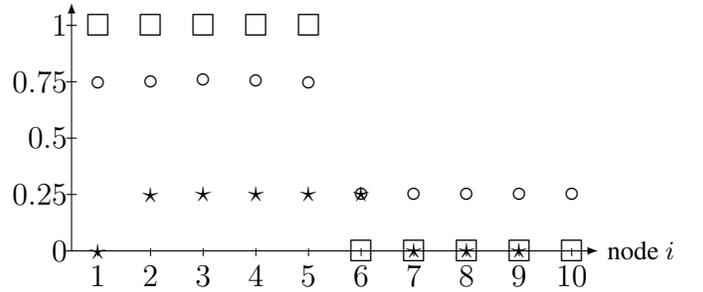
\begin{figure}[htbp]
	\begin{center}
		\begin{tikzpicture}
		\tikzset{x=0.7cm,y=3cm,every path/.style={>=latex},node style/.style={circle,draw}}
		\csvreader[ head to column names,%
		late after head=\xdef\iold{\i}\xdef\xold{\x},,%
		after line=\xdef\iold{\i}\xdef\xold{\x}]%
		{NumExpChainPrimal.csv}{}
		{\draw [line width=0.0mm] (\iold, \xold) (\i,\x) node {\large $\circ$};
		}
			\csvreader[ head to column names,%
	late after head=\xdef\iold{\i}\xdef\xold{\x},,%
	after line=\xdef\iold{\i}\xdef\xold{\x}]%
	{NumExpChainSig.csv}{}
	{\draw [line width=0.0mm] (\iold, \xold) (\i,\x) node {\large $\square$};
	}
	\csvreader[ head to column names,%
late after head=\xdef\iold{\i}\xdef\yold{\y},,%
after line=\xdef\iold{\i}\xdef\yold{\y}]%
{NumExpChainDual.csv}{}
{\draw [line width=0.0mm] (\iold, \yold) (\i,\y) node {\large $\star$};
}	

		\draw[->] (0.4,0) -- (10.5,0);
		
		\node [right] at (10.5,0.0) {\centering node $i$};
		\draw[->] (0.5,0) -- (0.5,1.1);
		
		\foreach \label/\labelval in {0/$0$,0.25/$0.25$,0.5/$0.5$,0.75/$0.75$,1/$1$}
		{ 
			\draw (0.4,\label) -- (0.6,\label) node[left] {\large \labelval};
		}
		
		\foreach \label/\labelval in {1/$1$,2/$2$,3/$3$,4/$4$,5/$5$,6/$6$,7/$7$,8/$8$,9/$9$,10/$10$}
		{ 
			\draw (\label,1pt) -- (\label,-2pt) node[below] {\large \labelval};
		}
		\end{tikzpicture}
		\vspace*{-10mm}
	\end{center}
	\caption{Piece-wise constant graph signal $x_{i}$ (``$\square$''), nLasso iterate $\hat{x}^{(K)}_{i}$ (``$\circ$'') 
		and dual iterate $\hat{y}^{(K)}_{(i,i+1)}$ (``$\star$''). 
	}
	\label{fig_solution_nLasso_chain}
	\vspace*{-3mm}
\end{figure}
To construct a flow $\hat{y}_{e}$, for edges $e=(i,i+1)$, that solves 
the nLasso dual \eqref{equ_network_flow}, we have to ensure \eqref{equ_sat_non_sat} 
and \eqref{equ_balanced_supply}. The condition \eqref{equ_sat_non_sat} is ensured by 
choosing $\hat{y}_{e}\!=\!\lambda W_{e}=1/4$ for the boundary edge $e\!=\!\{5,6\}$. The 
condition \eqref{equ_balanced_supply} is trivially satisfied since each cluster contains only 
one single sampled node. The flow values $\hat{y}_{e}$ of intra-cluster edges $(i,i+1)$, with 
$i\!\neq\!5$, can be determined using the conservation law \eqref{equ_constrains_extended_graph} for 
each node $i\!\notin\!\samplingset$ outside the sampling set. This results in 
\vspace*{-3mm}
\begin{equation}
\hat{y}_{(i,i+1)}= \begin{cases} 1/4 & \mbox{ for } i=2,\ldots,6 \\ 0 & \mbox{ otherwise.} \end{cases}. 
\vspace{-3mm}
\end{equation}
which resembles the iterate $y^{(K)}_{e}$ in Fig.\ \ref{fig_solution_nLasso_chain}. 
Given the dual solution $\hat{y}_{e}$, we obtain an nLasso solution $\hat{x}$ via \eqref{equ_opt_condition_constant}, \eqref{equ_nLasso_value_solution}. The simulation source code is available at \url{https://github.com/alexjungaalto/ResearchPublic/tree/master/FlowsNLasso}. 

\vspace*{-3mm}
\section{Conclusion}
\vspace{-3mm}

We have developed a theory of duality between nLasso and a minimum-cost 
network flow problem. This duality has been used to characterize nLasso 
solutions via the existence of certain network flows. Our work opens up 
several interesting research directions. It is interesting to study how 
parametric flow algorithms could be used to efficiently compute entire 
nLasso solution paths for varying $\lambda$ in \eqref{equ_nLasso}. 
Understanding the behavior of nLasso in terms of network flows could help to 
guide model reduction techniques by sparsifying the empirical graph without scarifying nLasso 
accuracy \cite{Batson2013}.

\bibliographystyle{plain}
\bibliography{/Users/alexanderjung/Literature}

\begin{thebibliography}{10}

\bibitem{elalaoui16}
A.~El Alaoui, X.~Cheng, A.~Ramdas, M.~J. Wainwright, and M.~I. Jordan.
\newblock Asymptotic behavior of $\ell_p$-based {L}aplacian regularization in
  semi-supervised learning.
\newblock In {\em Conf. on Learn. Th.}, pages 879--906, June 2016.

\bibitem{AvilesRivero2019}
A.I. Aviles-Rivero, N.~Papadakis, R.~Li, S.M. Alsaleh, and R.T. Tan C.-B.
  Schoenlieb.
\newblock Beyond supervised classification: Extreme minimal supervision with
  the graph 1-laplacian.
\newblock {\em hal-02170176}, 2019.

\bibitem{Batson2013}
J.~Batson, D.A. Spielman, N.~Srivastava, and S.-H. Teng.
\newblock Spectral sparsification of graphs: theory and algorithms.
\newblock {\em Communications of the ACM}, Aug. 2013.

\bibitem{Bauschke:2017}
H.~Bauschke and P.~Combettes.
\newblock {\em Convex Analysis and Monotone Operator Theory in Hilbert Spaces}.
\newblock Springer, New York, 2nd edition, 2017.

\bibitem{BertsekasNetworkOpt}
Dimitri~P. Bertsekas.
\newblock {\em Network Optimization: Continuous and Discrete Models}.
\newblock Athena Scientific, 1998.

\bibitem{BoydConvexBook}
S.~Boyd and L.~Vandenberghe.
\newblock {\em Convex Optimization}.
\newblock Cambridge Univ. Press, Cambridge, UK, 2004.

\bibitem{Brockwell91}
P.~J. Brockwell and R.~A. Davis.
\newblock {\em Time Series: Theory and Methods}.
\newblock Springer New York, 1991.

\bibitem{pock_chambolle}
A.~Chambolle and T.~Pock.
\newblock A first-order primal-dual algorithm for convex problems with
  applications to imaging.
\newblock {\em J. Math. Imag. Vis.}, 40(1), 2011.

\bibitem{SemiSupervisedBook}
O.~Chapelle, B.~Sch{\"o}lkopf, and A.~Zien, editors.
\newblock {\em Semi-Supervised Learning}.
\newblock The MIT Press, Cambridge, Massachusetts, 2006.

\bibitem{ChenClustered2016}
S.~Chen, R.~Varma, A.~Singh, and J.~Kova{\v c}evi{\'c}.
\newblock Representations of piecewise smooth signals on graphs.
\newblock In {\em Proc. IEEE ICASSP 2016}, Shanghai, CN, March 2016.

\bibitem{NetworkLasso}
D.~Hallac, J.~Leskovec, and S.~Boyd.
\newblock Network lasso: Clustering and optimization in large graphs.
\newblock In {\em Proc. SIGKDD}, pages 387--396, 2015.

\bibitem{ComplexitySLP2018}
A.~Jung.
\newblock On the complexity of sparse label propagation.
\newblock {\em Front. Appl. Math. Stat.}, 4:22, July 2018.

\bibitem{LocalizedLinReg2019}
A.~Jung and N.~Tran.
\newblock Localized linear regression in networked data.
\newblock {\em IEEE Sig. Proc. Lett.}, 26(7), Jul. 2019.

\bibitem{Karger1999}
D.R. Karger.
\newblock Random sampling in cut, flow, and network design problems.
\newblock {\em Mathematics of Operations Research}, 24(2), 1999.

\bibitem{Mairal2010}
J.~Mairal, R.~Jenatton, F.R. Bach, and G.R. Obozinski.
\newblock Network flow algorithms for structured sparsity.
\newblock In {\em Advances in Neural Information Processing Systems 23}, pages
  1558--1566, 2010.

\bibitem{Minoux1984}
M.~Minoux.
\newblock A polynomial algorithm for minimum quadratic cost flow problems.
\newblock {\em European Journal of Operational Research}, 18(3):377--387, Dec.
  1984.

\bibitem{Mossel2012}
E.~{Mossel}, J.~{Neeman}, and A.~{Sly}.
\newblock Stochastic block models and reconstruction.
\newblock {\em ArXiv e-prints}, Feb. 2012.

\bibitem{NSZ09}
B.~Nadler, N.~Srebro, and X.~Zhou.
\newblock Statistical analysis of semi-supervised learning: The limit of
  infinite unlabelled data.
\newblock In {\em Advances in Neural Information Processing Systems 22}, pages
  1330--1338. 2009.

\bibitem{NewmannBook}
M.~E.~J. Newman.
\newblock {\em Networks: An Introduction}.
\newblock Oxford Univ. Press, 2010.

\bibitem{PrecPockChambolle2011}
T.~Pock and A.~Chambolle.
\newblock Diagonal preconditioning for first order primal-dual algorithms in
  convex optimization.
\newblock In {\em IEEE ICCV}, Barcelona, Spain, Nov. 2011.

\bibitem{RockafellarBook}
R.~T. Rockafellar.
\newblock {\em Convex Analysis}.
\newblock Princeton Univ. Press, Princeton, NJ, 1970.

\bibitem{rudin1992nonlinear}
L.~I. Rudin, S.~Osher, and E.~Fatemi.
\newblock Nonlinear total variation based noise removal algorithms.
\newblock {\em Physica D: Nonlinear Phenomena}, 60(1):259--268, 1992.

\bibitem{VapnikBook}
V.~N. Vapnik.
\newblock {\em The Nature of Statistical Learning Theory}.
\newblock Springer, 1999.

\bibitem{Vegh2016}
L.A. Vegh.
\newblock A strongly polynomial algorithm for a class of minimum- cost flow
  problems with separable convex objectives.
\newblock {\em SIAM J. Comput.}, 45(5):1729--1761, 2016.

\bibitem{Wang2016}
Y.-X. Wang, J.~Sharpnack, A.J. Smola, and R.J. Tibshirani.
\newblock Trend filtering on graphs.
\newblock {\em Journal of Machine Learning Research}, 17(105):1--41, 2016.

\bibitem{Yin2017}
H.~Yin, A.R. Benson, J.~Leskovec, and D.F. Gleich.
\newblock Local higher-order graph clustering.
\newblock In {\em Proc. KDD'17}, Halifax, NS, Canada, Aug. 2017.

\end{thebibliography}
\newpage
\section{Supplementary Material}

To derive \eqref{equ_pd_first_update}-\eqref{equ_pd_last_update} as a fixed point iteration 
based on the optimality condition \eqref{equ_two_coupled_conditions}, 
we rewrite \eqref{equ_two_coupled_conditions} as 
\begin{align}
\primslp - {\bm \Gamma} \mB^{T} \dualslp & \in \primslp+ {\bm \Gamma} \partial h(\primslp)  \nonumber \\ 
2 {\bm \Lambda}  \mB \primslp +\dualslp  & \in {\bm \Lambda} \partial g^{*}(\dualslp)+ {\bm \Lambda} \mB\primslp+\dualslp,\label{equ_manipulated_coupled_conditions} 
\end{align}
with the invertible diagonal matrices
\begin{align} 
{\bf \Lambda}\!\defeq\! (1/2) \mathbf{I}\!\in\!\mathbb{R}^{\edges\!\times\!\edges} \mbox{, }
{\bf \Gamma}\!\defeq\!{\rm diag} \{ \gamma_{i}\!=\!1/d_{i} \}_{i\!=\!1}^{\signalsize}\!\in\!\mathbb{R}^{\signalsize\!\times \signalsize}.\label{equ_def_scaling_matrices}
\end{align}
The particular choice \eqref{equ_def_scaling_matrices} ensures that \cite[Lemma 2]{PrecPockChambolle2011}
\begin{equation}
\| {\bf \Gamma}^{1/2} \mB^{T} {\bf \Lambda}^{1/2} \|_{2} < 1, \nonumber
\end{equation}
which ensure converge of the proposed method. There are other choices 
than \eqref{equ_def_scaling_matrices} that ensure convergence. Data-driven 
tuning of the matrices ${\bf \Gamma}, {\bf \Lambda}$ is beyond the scope of this paper. 

We further develop the characterization \eqref{equ_manipulated_coupled_conditions} using 
the resolvent operators for the (set-valued) operators ${\bf \Lambda}  \partial g^{*}(\vy)$ and 
${\bm \Gamma} \partial h(\vx)$ (see \eqref{equ_nLasso_unconstr} and  \cite[Sec. 1.1.]{PrecPockChambolle2011}),  
\begin{align}
(\mathbf{I}\!+\!{\bf \Lambda} \partial g^{*})^{-1} (\tilde{\vy}) & \!\defeq\! \argmin\limits_{\vz \in \edgesigs} g^{*}(\vz)\!+\!(1/2) \| \tilde{\vy}\!-\!\vz \|_{{\bm \Lambda}^{-1}}^{2} \nonumber \\ 
(\mathbf{I}\!+\!{\bm \Gamma} \partial h)^{-1} (\tilde{\vx}) & \!\defeq\! \argmin\limits_{\vz \in \graphsigs} h(\vz)\!+\!(1/2) \| \tilde{\vx}\!-\!\vz\|_{{\bm \Gamma}^{-1}}^{2}. \label{equ_iterations_number_112}
\end{align}

Applying \cite[Prop. 23.2]{Bauschke:2017} and \cite[Prop. 16.44]{Bauschke:2017} to the optimality condition \eqref{equ_manipulated_coupled_conditions} 
yields the equivalent condition (for $\primslp$, $\dualslp$ to be primal and dual optimal) 
\begin{align}
\primslp &= (\mathbf{I}\!+\!{\bm \Gamma} \partial h)^{-1} (\primslp\!-\!{\bm \Gamma} \mB^{T} \dualslp) \nonumber \\ 
\dualslp\!-\!2(\mathbf{I}\!+\!{\bf \Lambda}  \partial g^{*})^{-1}   {\bf \Lambda}  \mB \primslp& = (\mathbf{I}\!+\!{\bf \Lambda}  \partial g^{*})^{-1}(\dualslp\!-\!{\bf \Lambda}\mB\primslp).\label{equ_condition_fix_point}
\end{align}  

The fixed point characterization \eqref{equ_condition_fix_point} of nLasso solutions 
suggests the following coupled  fixed-point iterations: 
\begin{align}
\label{equ_fixed_point_iterations}  
\hat{\vy}^{(k+1)} &\defeq (\mathbf{I} + {\bf \Lambda}  \partial g^{*})^{-1}(\hat{\vy}^{(k)} +  {\bf \Lambda}  \mB(2\hat{\vx}^{(k)}- \hat{\vx}^{(k-1)}))\nonumber \\  
\hat{\vx}^{(k+1)} &\defeq (\mathbf{I} + {\bm \Gamma} \partial h)^{-1} (\hat{\vx}^{(k)} - {\bm \Gamma} \mB^{T} \hat{\vy}^{(k+1)}).
\end{align}  
The fixed-point iterations \eqref{equ_fixed_point_iterations} are obtained as a special case of the iterations 
\cite[Eq. (4)]{PrecPockChambolle2011} when choosing $\theta\!=\!1$ (using the notation in \cite{PrecPockChambolle2011}). 

The updates in \eqref{equ_fixed_point_iterations} allow for simple closed-form expressions
(see \cite[Sec. 6.2.]{pock_chambolle} for more details). 
Inserting these expressions into \eqref{equ_fixed_point_iterations} yields the updates 
\eqref{equ_pd_first_update}-\eqref{equ_pd_last_update} for iterative solving of nLasso \eqref{equ_nLasso}.

\noindent{\bf Bounding Sub-Optimality.} The identity \eqref{equ_zero_duality_gap} allows to 
bound the sub-optimality $\nLassoObj\big(\tilde{\vx}\big) - \nLassoObj\big(\hat{\vx}\big)$ 
of a given candidate $\tilde{\vx}$ for the solution of nLasso \eqref{equ_nLasso_unconstr}. Inserting 
an arbitrary dual vector $\vy$ into  \eqref{equ_zero_duality_gap}, 
\begin{equation}
\label{equ_upper_bound_subopt}
\nLassoObj\big( \tilde{\vx} \big)-  \nLassoObj\big(\hat{\vx}\big) \leq \nLassoObj\big( \tilde{\vx} \big) - \nDualLassoObj(\vy). 
\end{equation} 
Note that the right hand side in \eqref{equ_upper_bound_subopt} can be evaluated for any given pair $\hat{\vx}$, $\vy$ of primal and dual vectors.

\noindent{\bf Stopping Criteria.}
Possible stopping criteria for the updates \eqref{equ_pd_first_update}-\eqref{equ_pd_last_update} include a fixed 
number of iterations or testing for a sufficiently small sub-optimality gap $\mathcal{L}(\bar{x}^{(k)}) -\mathcal{L}(\hat{x})$. 
We can ensure a maximum sub-optimality gap using the bound \eqref{equ_upper_bound_subopt}. 
When using a fixed number $K$ of iterations, one can use well-known results on 
the convergence rate of primal-dual methods \cite{pock_chambolle}. Roughly speaking 
these results imply that the sub-optimality of the iterates $\bar{x}^{(K)})$ decrease 
according to $\propto 1/K$.  
The convergence rate $1/K$ is tight among all message passing methods to solve \eqref{equ_nLasso}. 
It is attained in chain-structured graphs (see \cite{ComplexitySLP2018}). 


\noindent{\bf Proof of Proposition \ref{prop_dual_TV_min_flow}}
We first note that \eqref{equ_dual_SLP} is equivalent to 
\begin{align}
\label{equ_network_flow_intmdt} 
\min_{\widehat{\vy} \in \mathbb{R}^{\edges}} & \sum_{i \in \trainingset}  v_{i} \big( (1/2) v_{i} \!-\!x_{i} \big),   \\ 
\mbox{s.t.} &  \sum_{j \in \mathcal{N}^{+}(i)} \hat{y}_{(i,j)} -  \sum_{j \in \mathcal{N}^{-}(i)} \hat{y}_{(j,i)}   = \begin{cases} v_{i} &\mbox{ for } i \in \samplingset   \\ 0 & \mbox{ otherwise.} \end{cases} \label{equ_constr_first_prob_proof} \\ 
&  |\hat{y}_{(i,j)} | \leq \lambda W_{e} \mbox{ for all } e \in \edges. \nonumber
\end{align}
The definition \eqref{equ_conv_conj_h} and \eqref{equ_conv_conj_g} for the components of 
\eqref{equ_dual_SLP} enforce implicit constraints on the dual vector that are identical 
with the constrains \eqref{equ_constr_first_prob_proof}.  
Thus, any optimal dual vector $\widehat{\vy}$ solving \eqref{equ_dual_SLP},  
must satisfy the constraints \eqref{equ_constr_first_prob_proof}). 
However, the objective functions in \eqref{equ_network_flow_intmdt} and \eqref{equ_dual_SLP} 
coincide when evaluated for vectors $\vy$ satisfying \eqref{equ_network_flow_intmdt}. 
	
The final step of the proof is to verify equivalence of the optimization 
problems \eqref{equ_network_flow_intmdt} and \eqref{equ_network_flow}. 
To this end, we note that the additional edges $(i,\star)$ in $\widetilde{\graph}$ (see Fig.\ \ref{fig:duality}-(b)) 
have no capacity constraints, or ``unbounded'' capacity, which allows them to 
``discharge'' the node demands $v_{i}$, for $i \in \samplingset$.

Consider an optimal flow $\hat{y}$ which solves \eqref{equ_network_flow_intmdt}. 
We then construct a flow $\tilde{y}_{e}$ on the extended graph $\extgraph$ by setting 
$\tilde{y}_{e} \defeq \hat{y}_{e}$ for all intra-cluster edges $e \in \edges \setminus \partial \partition$ 
and $\tilde{y}_{(i,\star)} \defeq v_{i}$ for all sampled nodes $i \in \samplingset$. 

The accumulating 
node ``$\star$'' has only inward edges resulting in the total demand $\sum_{i \in \samplingset} v_{i}$. 
However, this sum is zero since the demands $v_{i}$ of the flow in \eqref{equ_network_flow_intmdt} 
must sum to zero (see \cite[Chap.\ 1]{BertsekasNetworkOpt}).




\end{document}